\theoremstyle{definition}
\newtheorem{theorem}{Theorem}
\newtheorem{cor}{Corollary}
\newtheorem{lemma}{Lemma}
\newtheorem{remark}{Remark}
\newtheorem{condition}{Condition}
\newcommand{\bbR}{\mathbb{R}}
\newcommand{\bbN}{\mathbb{N}}
\newcommand{\Id}{\mathrm{I}}
\DeclareMathOperator{\gtr}{tr}
\newcommand{\normal}{\mathsf{N}}
\newcommand{\E}{\mathbb{E}}
\renewcommand{\P}{\mathbb{P}}
\DeclarePairedDelimiter\bkt{[}{]}     
\newcommand{\@exstar}[1]{\E \bkt*{#1}}
\newcommand{\@exnostar}[2][]{\E \bkt[#1]{#2}}
\newcommand{\ex}{\@ifstar\@exstar\@exnostar}
\newcommand{\@prstar}[1]{\P \bkt*{#1}}
\newcommand{\@prnostar}[2][]{\P \bkt[#1]{#2}}
\newcommand{\pr}{\@ifstar\@prstar\@prnostar}
\DeclareMathOperator{\cov}{\mathsf{Cov}}
\pgfplotsset{compat=newest} 
\let\originalleft\left
\let\originalright\right
\renewcommand{\left}{\mathopen{}\mathclose\bgroup\originalleft}
\renewcommand{\right}{\aftergroup\egroup\originalright}
\newcommand{\target}{\mu}
\newcommand{\Np}{\tilde{N}}
\newcommand{\Xf}{X}
\newcommand{\Xb}{\bar{X}}
\newif\ifisit
\newif\ifcomments
\title{Information-Theoretic Proofs for Diffusion Sampling}
\author{Galen Reeves and Henry D. Pfister}
\begin{document}

\maketitle

\begin{abstract}
This paper provides an elementary, self-contained analysis of diffusion-based sampling methods for generative modeling.  In contrast to existing approaches that rely on continuous-time processes and then discretize, our treatment works directly with discrete-time stochastic processes and yields precise non-asymptotic convergence guarantees under broad assumptions.  The key insight is to couple the sampling process of interest with an idealized comparison process that has an explicit Gaussian–convolution structure.  We then leverage simple identities from information theory, including the I-MMSE relationship, to bound the discrepancy (in terms of the Kullback-Leibler divergence) between these two discrete-time processes.  In particular, we show that, if the diffusion step sizes are chosen sufficiently small and one can approximate certain conditional mean estimators well, then the sampling distribution is provably close to the target distribution. Our results also provide a transparent view on how to accelerate convergence by using additional randomness in each step to match higher-order moments in the comparison process. 
\end{abstract}


\section{Introduction}

Diffusion-based sampling methods have emerged as powerful tools for machine learning applications~\cite{Sohl-icml15,song:2019generative,Ho-neurips20,Nichol-icml22,Rombach-cvpr22,Saharia-neurips22}. The high-level idea behind these methods is to define a stochastic process that transforms a sequence of samples from an easy-to-sample distribution (e.g., an isotropic  Gaussian) into a sample from a target distribution on a high-dimensional space (e.g., a natural image)~\cite{Sohl-icml15,Ho-neurips20}.

The theoretical justification for these methods typically follows a two-stage argument~\cite{Song-iclr21}. First, one specifies a continuous-time process, called a diffusion, that models the underlying distribution of interest. Then, one argues that this process can be simulated accurately by a discrete-time process to generate approximate samples from the target distribution. 

This paper presents a simple and intuitive \emph{discrete-time} proof that explains the effectiveness of diffusion-based sampling methods. The origin of this work lies in the authors' desire to provide an elementary presentation of diffusion models suitable for first-year graduate students. Focusing directly on discrete-time stochastic processes, we derive precise non-asymptotic guarantees under very general assumptions. 
Along the way, this approach unearths interesting connections between diffusion modeling and the celebrated I-MMSE relationship from information theory \cite{guo:2005a}, which provides a link between mutual information (the 'I') and the minimum mean-squared error (MMSE) in additive Gaussian noise models. 

We note that novelty in this work lies more in the path chosen for the presentation than in the mathematical details of the individual steps, which may have appeared in some form previously in the literature.

\subsection{Overview of Main Results}

Consider the problem of sampling from a target distribution $\target$ on $\bbR^d$. In practice this distribution may be known exactly, or it may only be described implicitly by a set of samples. Many popular sampling methods generate a process that can be represented by
\begin{align}
    Z_k = Z_{k-1}  +  \delta_k f_k(Z_{k-1}) +  \sqrt{\delta_k} \Np_k   \label{eq:Zk}
\end{align}
starting from $Z_0 = 0$, where $\delta_1, \delta_2, \dots$ are step sizes, $f_1, f_2, \dots$ are functions $f_k \colon\bbR^d \to \bbR^d$, and $\Np_1, \Np_2, \dots$ are independent standard Gaussian vectors. 

If the functions are linear then this is a classical  first-order autoregressive Gaussian process. The more interesting setting for modern applications arises when the functions are nonlinear and the resulting distribution is non-Gaussian. One canonical choice for $f_k$ is given by the mapping from $z\in \bbR^d$ to the conditional mean\footnote{
There is an affine mapping between the conditional mean and the \emph{score function}, i.e., the gradient of the log density of $Y_{k-1}$.} of $X \sim \mu$ given an observation $Y_{k-1} = z$ in Gaussian noise; see \eqref{eq:Ykalt} below.

For the purpose of theoretical analysis, we introduce a  ``comparison process'' defined by 
\begin{align}
    Y_k =  Y_{k-1}  + \delta_{k} X + \sqrt{\delta_k}N_k \label{eq:Yk}
\end{align}
starting from 
$Y_0 = 0$, where $X \sim \target$ is independent of the Gaussian noise sequence $\{N_k\}$. In contrast to \eqref{eq:Zk}, the distribution of this process is easy to describe. Indeed, by summing the increments and defining $t_k \coloneqq \delta_1 + \dots + \delta_k$, this process can be expressed equivalently as
\begin{align} 
    Y_k = t_k X + W_k  , \qquad  W_k \coloneqq  \sum_{i=1}^k \sqrt{\delta_i} N_i . \label{eq:Ykalt}
\end{align}
Since $W_1, W_2, \dots$ is a zero-mean Gaussian process with covariance $\cov(W_k,W_m) = \min\left\{ t_k, t_m \right\}\Id$, the marginal distribution of $Y_k$ satisfies
\begin{align}
    \mathsf{Law}(t_k^{-1} Y_k)  = \target \ast \normal( 0, t_k^{-1} \Id) ,
\end{align}
where $\normal( m, K)$ denotes a Gaussian measure with mean $m$ and covariance $K$ and $\ast$ denotes the convolution of measures. For sufficiently large $t_k$, a sample from this distribution is often considered a suitable proxy for a sample from $\target$. 

Of course, the process in \eqref{eq:Yk} is not a viable sampling strategy because its implementation requires a sample from the target distribution. In contrast, the process in \eqref{eq:Zk} only requires samples from the standard Gaussian distribution.  This paper focuses on the divergence between these two processes and computes an exact expression for 
\begin{align}
\Delta_n & \coloneqq D\big(\mathsf{Law}(Y_1,\dots, Y_n)  \, \| \,  \mathsf{Law}(Z_1, \dots, Z_n)\big),
\end{align}
where $D(P\,\|\,Q)$ denotes the Kullback-Leibler divergence\footnote{In this paper, all logarithms are natural and all quantities of information are expressed in nats.} (or relative entropy) between distributions $P$ and $Q$.
The following theorem gives a bound on $\Delta_n$ that depends only on the covariance of $\target$,  the step sizes $\{\delta_k\}$ and how well each $f_k$ approximates the conditional mean estimator of $X$ given $Y_{k-1}$. \begin{theorem}\label{thm:DYZ}
Assume that $\target$ has finite second moments and $\ex{\|f_k(Y_{k-1})\|^2} < \infty$ for all $k = 1, \dots, n$. Then,  
\begin{align}
\Delta_n
    & \le  \frac{\delta_\mathrm{max}}{2} \gtr( \cov(X)) 
    +\sum_{k=1}^{n} \frac{\delta_k}{2}  \ex*{ \|f_k(Y_{k-1}) - \ex{ X \mid Y_{k-1}} \|^2},
\end{align}
where $\delta_\mathrm{max} \coloneqq \max \{ \delta_1, \dots, \delta_n\}$. 
\end{theorem}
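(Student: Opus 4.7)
The plan is to reduce $\Delta_n$ via the chain rule to a sum of one-step conditional KL divergences, then handle each term by introducing an intermediate Gaussian and applying the I-MMSE identity.

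First, since both laws are defined on the same filtration, the chain rule for relative entropy gives
\begin{align}
\Delta_n = \sum_{k=1}^{n} \ex*{D\left(P_{Y_k \mid Y_{<k}} \,\|\, P_{Z_k \mid Z_{<k}}\right)},
\end{align}
evaluated at $Z_{<k}=Y_{<k}$. Because $\{Z_k\}$ is Markov by construction, $P_{Z_k \mid Z_{<k}=Y_{<k}} = \normal(Y_{k-1} + \delta_k f_k(Y_{k-1}),\, \delta_k \Id)$. I would then introduce the auxiliary Gaussian $Q_k \coloneqq \normal(Y_{k-1} + \delta_k\, \ex{X \mid Y_{<k}},\, \delta_k \Id)$, which shares the covariance of $P_{Z_k\mid Z_{<k}}$ but uses the Bayes-optimal drift. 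Splitting $D(P_{Y_k\mid Y_{<k}} \,\|\, P_{Z_k\mid Z_{<k}})$ as $D(P_{Y_k\mid Y_{<k}} \,\|\, Q_k)$ plus the expected log-ratio of the two Gaussian densities under $P_{Y_k\mid Y_{<k}}$ and completing the square reduces the latter piece exactly to $\tfrac{\delta_k}{2}\|f_k(Y_{k-1}) - \ex{X \mid Y_{<k}}\|^2$. Summed over $k$, this is the second term of the bound.

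To bound $D(P_{Y_k\mid Y_{<k}}\,\|\,Q_k)$, observe that after shifting by the conditional mean and rescaling, this equals $D(P_V \,\|\, \normal(0,\Id))$ with $V = \sqrt{\delta_k}\tilde X + N$, where $\tilde X \coloneqq X - \ex{X\mid Y_{<k}}$ is posterior-centered. Applied conditionally to the artificial SNR channel $V_s = \sqrt{s}\tilde X + N$, the I-MMSE identity expresses this as
\begin{align}
D(P_V \,\|\, \normal(0,\Id)) = \tfrac{1}{2}\int_0^{\delta_k} \ex*{\|\ex{\tilde X \mid V_s, Y_{<k}}\|^2 \,\big|\, Y_{<k}}\, ds.
\end{align}
Taking the outer expectation and using that, for $X$ observed in independent Gaussian channels, the pair $(Y_{<k}, V_s)$ is informationally equivalent to a single observation at cumulative SNR $t_{k-1}+s$, the per-step term collapses to $\tfrac{1}{2}\int_{t_{k-1}}^{t_k}[G(s) - G(t_{k-1})]\,ds$ with $G(s) \coloneqq \ex*{\|\ex{X \mid \sqrt{s}X + N}\|^2}$.

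The function $G$ is nondecreasing (by Jensen plus the tower property), so each per-step integral is bounded by $\tfrac{\delta_k}{2}[G(t_k) - G(t_{k-1})]$. Bounding $\delta_k \le \delta_\mathrm{max}$ and telescoping yields $\tfrac{\delta_\mathrm{max}}{2}(G(t_n) - G(0)) \le \tfrac{\delta_\mathrm{max}}{2}\gtr(\cov(X))$, matching the first term. A final bookkeeping step uses that $Y_{k-1}$ is sufficient for $X$ within $Y_{<k}$---the earlier partial sums differ from a scaled $Y_{k-1}$ by Gaussian noise uncorrelated with $X$ and $Y_{k-1}$---so $\ex{X\mid Y_{<k}} = \ex{X \mid Y_{k-1}}$, rewriting the quadratic correction as in the statement. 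I expect the main obstacle to be the I-MMSE identification: recognizing the conditional mixture-Gaussian KL as an integrated posterior second moment and verifying that its outer expectation over $Y_{<k}$ collapses into increments of the unconditional curve $G$ at total SNR $t_{k-1}+s$. Once that is set up, monotonicity plus telescoping closes the bound.
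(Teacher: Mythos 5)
Your proposal is correct and follows essentially the same route as the paper: a per-step chain-rule/Markov decomposition of $\Delta_n$, extraction of the mismatch term $\tfrac{\delta_k}{2}\ex{\|f_k(Y_{k-1})-\ex{X\mid Y_{k-1}}\|^2}$ by orthogonality (with the same sufficiency argument showing $\ex{X\mid Y_1,\dots,Y_{k-1}}=\ex{X\mid Y_{k-1}}$), and control of the remaining per-step defect via I-MMSE, monotonicity of the MMSE, and a telescoping sum bounded by $\tfrac{\delta_\mathrm{max}}{2}\gtr(\cov(X))$. The only difference is organizational: you phrase the I-MMSE step as a conditional KL-to-Gaussian identity in terms of $G(t)=\ex{\|X\|^2}-M(t)$, whereas the paper first records the exact identity of Lemma~\ref{lem:DYZexact} involving $I(X;Y_n)$ and then applies the sandwich bound \eqref{eq:I-MMSE_bnds}; these are equivalent rearrangements of the same argument.
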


\begin{remark} Theorem~\ref{thm:DYZ} shows that the distributions of  $\{Y_k\}$ and $\{Z_k\}$  can be made arbitrarily close provided that the step sizes $\{\delta_k\}$ are small enough and the functions $\{f_k\}$ accurately approximate the conditional mean estimator defined by the comparison process.  For example, suppose that the goal is to produce an approximate sample from $\target \ast \normal(0, T^{-1} \Id) $ for given value  $T > 0$. Assuming $f_k$ is equal to the conditional mean estimator and using uniform step sizes $\delta_k = T/n$ leads to 
\begin{align*}
    \Delta_n \le \frac{T \gtr(\cov(X))}{2n}. \label{eq:simple_delta_n}
\end{align*}
In this way, the problem of producing a sample has been reduced to the problem of computing a sequence of conditional-mean estimates at different noise levels.
\end{remark}

\begin{remark} The result is also ``dimension-free'' in that neither the assumptions nor the bound depend explicitly on $d$. Thus, for example, this result extends to distributions on the infinite-dimensional Hilbert space of square summable sequences. 
\end{remark}

\begin{remark}
By virtue of Pinsker's inequality, a bound on $\Delta_n$ implies a bound on the total variation distance:
\begin{align}
\mathsf{TV}\big(\mathsf{Law}(Y_1,\dots, Y_n) ,   \mathsf{Law}(Z_1, \dots, Z_n) \big) \le \sqrt{ \tfrac{1}{2} \Delta_n}  .  
\end{align}
Of course, this implies that $\mathsf{TV}\left(\mathsf{Law}(Y_n) ,   \mathsf{Law}(Z_n) \right)$ is upper bounded by the same quantity.
\end{remark}

An elementary proof of Theorem~\ref{thm:DYZ} is presented in Section~\ref{sec:proof}, and the intuition behind the sampling scheme and connections with continuous-time models are discussed in Section~\ref{sec:intuition}. Additional results that follow as natural consequences of our general approach are stated in Section~\ref{sec:further_results}.  In particular:
\begin{itemize}
\item Theorem~\ref{thm:alpha_bound}
 provides a ``dimension-free'' bound that allows $T$ to grow superlinearly in $n$, where the exact dependence is determined by the high-SNR scaling of the mutual information for the target distribution in an additive Gaussian noise channel.
 \item Theorem~\ref{thm:moments} generalizes to the setting where $f_k$ returns random values from a distribution that approximates the conditional distribution of $X$ given $Y_{k-1}$ in \eqref{eq:Yk}.  It is shown that if the moments are matched up to order $m \in \bbN$, then $\Delta_n$ decreases at the rate $n^{-m}$. For example, matching the mean results in the $n^{-1}$ dependence implied by Theorem~\ref{thm:DYZ} and matching second moments gives a rate $n^{-2}$. 
\end{itemize}

\begin{figure}
    \centering
\usepgfplotslibrary{fillbetween}
\begin{tikzpicture}
    \begin{axis}[
    	width = 5in,
	height = 3in,
        axis x line=bottom, axis y line=left,
        legend style = {draw opacity=1},
        ylabel style={yshift=-1.2cm},
        ylabel={MSE},
        ymin=0, ymax=1.1,
        xmin=0, xmax=5.1,
        samples=100,
        domain=0:5,
        xtick={0,1,2,3,4,5},
        xticklabels={$t_0$, $t_1$, $t_2$, $t_3$, $t_4$, $t_5$},
        ytick={0,1},
        yticklabels ={$0$, $\gtr(\cov(X))$},
        legend cell align={left},
        ylabel style={rotate=-90},
    ]
    
        \addplot[name path = f, blue, very thick, domain=0:5, samples=100 ] {1/(1+x)}; 
        \addlegendentry{$M(t)$} 
         \addplot[name path = axis , forget plot] coordinates {(0,0) (5,0)};
         \addplot[red!15, area legend] fill between[of=axis and axis, soft clip={domain=0:5}];
        \addlegendentry{$= 2 \Delta_5$}           
         \addplot[blue!15, area legend] fill between[of=f and axis, soft clip={domain=0:5}];
          \addlegendentry{$=  2 I(X; Y_{5})$}
        
       \pgfplotsinvokeforeach{0,1,2,3,4} 
       {
      \addplot[ name path = g#1, red, very thick ] coordinates {(#1,{1/(1+#1)}) ({#1+1},{1/(1+#1)})};
      \addplot[red!30, opacity=0.4] fill between[of=f and g#1, soft clip={domain=#1:{#1+1}}];
       }
    \end{axis}
\end{tikzpicture}
\caption{Our results provide an exact connection between the divergence $\Delta_n$ and the mutual information $I(X;Y_{n})$ in terms of the MMSE function $M(t) \coloneqq \E \|X-\E[X\mid\sqrt{t}X+N]\|^2$ for the target distribution $\target$.  Assuming each $f_k$ is the conditional mean estimator, Lemma~\ref{lem:DYZexact} shows that  $\Delta_n$ is equal to one half of the (red) area between the integral of $ M(t)$ and its left Riemann approximation. The I-MMSE relation states that the mutual information $I(X;Y_{n})$ is equal to one half the (blue) area under the MMSE function. The upper bound in Theorem~\ref{thm:DYZ} follows from the fact that the sum of the red areas cannot exceed $\delta_\mathrm{max} \gtr(\cov(X))$. 
    }
    \label{fig:Delta_Integral}
\end{figure}

\subsection{Background and Related Work}

Diffusion sampling has become very popular for generative models due to its amazing performance on collections of digital images from the Web.
An important component of these models is that the generation is conditional (e.g., on a text prompt) and this corresponds to the functions $f_k$ depending on that prompt.
In 2022, the Imagen text-to-image model based on conditional diffusion sampling was released and widely celebrated~\cite{Saharia-neurips22}.
The results were clearly better than the groundbreaking DALLE-1 text-to-image model, released in 2021, which is based instead on transformers that generate visual tokens~\cite{Ramesh-icml21}.
This encouraged many researchers to focus on diffusion sampling.
For example, the DALLE-2 model from 2022 is based on diffusion sampling~\cite{ramesh-arxiv22}.

But, the current interest in generative diffusion models actually traces back to a 2015 paper~\cite{Sohl-icml15} that was improved by a sequence of follow-on papers~\cite{Ho-neurips20,Nichol-icml22,Rombach-cvpr22}. In particular, the first papers worked in pixel space~\cite{Ho-neurips20,Nichol-icml22}. Later, significant speedups were achieved by performing the diffusion in latent space (e.g., the diffusion process operates in the latent space defined by a model trained for image recognition and reconstruction)~\cite{Rombach-cvpr22}. Significant gains were also seen with larger language models for prompts, hierarchical generation, and upsampling~\cite{Saharia-neurips22}.

Theoretically, this early work led to analyses based on stochastic differential equations~\cite{Song-iclr21} and connections to an older idea known as stochastic localization~\cite{Eldan-gafa13,Chen-focs22}.
More recently, these ideas have been connected to information theory~\cite{Alaoui-it22,Montanari-arxiv23,Kong-iclr23,Kong-iclr24}. Recent work on convergence rates includes \cite{lee:2023convergence,chen:2023sampling,chen:2023improved,benton:2024nearly,li:2024towards,li:2024accelerating} and acceleration methods proposed in  \cite{wu2024:stochastic,li:2024provable}.

\section{Proof of Theorem~\ref{thm:DYZ}} 
\label{sec:proof}

\begin{lemma}
\label{lem:MarkovProperty}
The process $\{Y_k\}$ defined in \eqref{eq:Yk} is a Markov chain. 
\end{lemma}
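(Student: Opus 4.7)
The plan is to reduce the Markov property to a sufficiency statement about the shared random variable $X$. Since $Y_k = Y_{k-1} + \delta_k X + \sqrt{\delta_k} N_k$ and the noise $N_k$ is independent of everything prior, the conditional distribution of $Y_k$ given the full history $(Y_1,\dots,Y_{k-1})$ is completely determined by $Y_{k-1}$ together with the conditional distribution of $X$ given $(Y_1,\dots,Y_{k-1})$. Thus the Markov property will follow from showing the sufficiency statement
\begin{align}
\mathsf{Law}(X \mid Y_1,\dots,Y_{k-1}) = \mathsf{Law}(X \mid Y_{k-1}),
\end{align}
i.e.\ that $Y_{k-1}$ is a sufficient statistic for $X$ based on the full trajectory up to time $k-1$.

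To prove sufficiency, I would compute the joint conditional density of the trajectory given $X=x$. Since $Y_i - Y_{i-1} = \delta_i x + \sqrt{\delta_i} N_i$ given $X=x$, these increments are independent Gaussians, and
\begin{align}
p(y_1,\dots,y_{k-1} \mid x) = \prod_{i=1}^{k-1} (2\pi \delta_i)^{-d/2} \exp\left(-\frac{\|y_i - y_{i-1} - \delta_i x\|^2}{2\delta_i}\right).
\end{align}
Expanding the quadratic as $\|y_i - y_{i-1}\|^2 - 2\delta_i x^\top(y_i - y_{i-1}) + \delta_i^2 \|x\|^2$ and summing, the cross term telescopes into $x^\top y_{k-1}$ and the quadratic term collapses to $t_{k-1}\|x\|^2$. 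This gives a factorization of the form $p(y_1,\dots,y_{k-1}\mid x) = A(y_1,\dots,y_{k-1}) \, B(x, y_{k-1})$, from which the Fisher--Neyman factorization theorem (or a direct Bayes calculation) yields sufficiency. Equivalently, $X \indep (Y_1,\dots,Y_{k-2}) \mid Y_{k-1}$.

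Finally, conditioning on $(Y_1,\dots,Y_{k-1})$ and integrating against $N_k$ and the conditional law of $X$, the above conditional independence lets me replace $\mathsf{Law}(X \mid Y_1,\dots,Y_{k-1})$ by $\mathsf{Law}(X \mid Y_{k-1})$, so the conditional law of $Y_k$ depends on the history only through $Y_{k-1}$. The main (and essentially only) obstacle is the density calculation in the middle paragraph: one must see that the telescoping of the increment sum and the constancy of $t_{k-1}$ are exactly what cause the $x$-dependence in the joint density to collapse onto the single variable $y_{k-1}$. Everything else is bookkeeping.
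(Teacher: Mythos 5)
Your argument is correct and is essentially identical to the paper's first proof: both establish that $Y_{k-1}$ is a sufficient statistic for $X$ given the whole history by writing the joint density of the Gaussian increments given $X=x$ and applying the Fisher--Neyman factorization (the telescoping cross term and the collapse to $t_{k-1}\|x\|^2$ are exactly the computation in the paper), and then combine this conditional independence with the independence of the fresh noise to conclude the Markov property.
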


At its core, the Markov property is a  consequence of the orthogonal invariance of the standard Gaussian distribution.  We present two elementary proofs.  The first proceeds by showing that $Y_k$ is a sufficient statistic for estimating $X$ from $(Y_1, \dots, Y_k)$. The second shows that the time-reversed process is a Markov chain with independent Gaussian increments.
We denote the probability density function of $\normal(0,\delta_k \Id)$ by
\begin{align}
\phi_k (z) \coloneqq (2 \pi \delta_k)^{-d/2} \exp \left\{-\tfrac{1}{2\delta_k} \|z\|^2\right\}. \label{eq:phik}
\end{align}

\begin{proof}[First Proof of Lemma~\ref{lem:MarkovProperty}]
Consider the difference sequence
\begin{align}
V_k \coloneqq Y_k - Y_{k-1} = \delta_k X + \sqrt{ \delta_k} N_k.
\end{align}
Given $X=x$, the joint density of $V_1, \dots, V_k$  factors as 
\begin{align}
& \prod_{i=1}^k (2 \pi \delta_i)^{-d/2} \exp\left\{   \tfrac{1}{2\delta_i} \|v_i - \delta_i x\|^2\right\} =\phi_1(v_1) \cdots \phi_k(v_k)
   \exp\left\{    \left\langle  {\textstyle \sum_{i=1}^k} v_i, x  \right \rangle  - \tfrac{t_k}{2} \|x\|^2 \right\}.
\end{align}
By the Fisher-Neyman factorization theorem, it follows that $Y_k = \sum_{i=1}^k V_i$ is a sufficient statistic for estimating $X$ from the observations $(V_1, \dots, V_k)$.  Sufficiency also holds with respect $(Y_1, \dots, Y_k)$ which can be obtained from a one-to-one transformation of $(V_1, \dots, V_k)$.

This sufficiency implies that the parameter $X$ and the observations $(Y_1, \dots, Y_{k-1})$ are conditionally independent given the sufficient statistic $Y_k$.
The Markov property follows from combining this with the fact that $N_{k+1}$ is independent of $(Y_1, \dots, Y_k)$ and concluding that $Y_{k+1}$ and $(Y_1, \dots, Y_{k-1})$ are conditionally independent given $Y_k$. 
\end{proof}

\begin{proof}[Second Proof of Lemma~\ref{lem:MarkovProperty}] Consider the difference sequence
\begin{align}
B_k
& \coloneqq \sqrt{\frac{t_{k+1}}{t_{k}}} Y_k - \sqrt{\frac{t_k}{t_{k+1}}} Y_{k+1}   = \sqrt{\frac{t_{k+1}}{t_{k}}} W_k - \sqrt{\frac{t_k}{t_{k+1}}} W_{k+1}.
\end{align}
The second step, which follows from \eqref{eq:Ykalt},  shows that the sequences $\{B_k\}$ and $\{W_k\}$ are jointly Gaussian and independent of $X$. Using the fact that $\cov(W_k, W_m)= \min\{t_k, t_m\} \Id$, a simple calculation reveals that $ \cov(B_k, W_m) =0$ for $m >k$  
and thus $B_k$ is independent of $( W_{k+1}, W_{k+2}, \dots )$. Putting everything together, we conclude that
\begin{align}
Y_{k} & = \frac{t_k}{ t_{k+1}} Y_{k+1}  + \sqrt{\frac{t_k}{ t_{k+1}}}  B_k, \label{eq:Yk_reverse}
\end{align}
where $B_k \sim \normal(0,\delta_{k+1} \Id)$ is independent of $(Y_{k+1}, Y_{k+2}, \dots )$. Hence,  the time-reversed process is a Markov chain with independent Gaussian increments. 
\end{proof}


Having established the Markov property, we can now provide an exact expression for $\Delta_n$ in terms of the mean-squared error $\ex{ \|X - f_k(Y_{k-1})\|^2}$  and the mutual information 
$    I(X; Y) = D( \mathsf{Law}(X,Y) \, \| \, \mathsf{Law}(X) \otimes \mathsf{Law}(Y)) $.

\begin{lemma}\label{lem:DYZexact} Under the assumptions of Theorem~\ref{thm:DYZ},
\begin{align}
\Delta_n & =  \sum_{k=1}^n \frac{\delta_k}{2} \ex{ \| X- \ex{X \mid Y_{k-1}} \|^2}  - I(X; Y_n)  +  \sum_{k=1}^n \frac{\delta_k}{2} \ex{ \| f_k(Y_{k-1}) - \ex{ X \mid Y_{k-1}} \|^2} . 
\end{align} 
\end{lemma}
\begin{proof} 
Using the fact that both $\{Y_k\}$ and $\{Z_k\}$ are Markov chains, we can write 
\begin{align}
\Delta_n & = \sum_{k=1}^n\ex*{ \log \frac {p_k(Y_k \mid Y_{k-1})}{q_k(Y_k \mid Y_{k-1})}}, \label{eq:Delta_pq}
\end{align}
where $p_k$ and $q_k$ are the transition probability densities for $Y_k \mid Y_{k-1}$ and $Z_k \mid  Z_{k-1}$ with respect to Lebesgue measure.  From \eqref{eq:Zk}, we see that $q_k(y \mid y') = \phi_k(y - y' - \delta_k f_k(y') )$ and thus 
 \begin{align}
  -\ex*{ \log  q_k(Y_k \mid Y_{k-1})}  - \frac{d}{2} \log( 2 \pi e \delta_k)  
 &= \frac{1}{2 \delta_k} \ex*{ \| Y_k - Y_{k-1}  - \delta_k f_k(Y_{k-1}) \|^2}- \frac{d}{2}\\
  &   = 
  \frac{1}{2 \delta_k} \ex*{ \|  \delta_k X  + \sqrt{\delta_k} N_k - \delta_k f_k(Y_{k-1}) \|^2} - \frac{d}{2}  \\
  & =  \frac{\delta_k}{2 } \ex*{ \| X -  f_k(Y_{k-1}) \|^2}\\
 & = \frac{\delta_k}{2} \ex*{ \| X - \ex{ X \mid Y_{k-1}}\|^2 }+ \frac{\delta_k}{2 } \ex*{ \|  \ex{X \mid Y_{k-1} }  - f_k(Y_{k-1}) \|^2},
\end{align}
where the second step follows from \eqref{eq:Yk}, the third step holds because  $N_k$ is independent of $(X, Y_{k-1})$, 
and the last step follows from the orthogonality principle for conditional expectation. 

Meanwhile, noting that $Y_k \mid X, Y_{k-1}$ is Gaussian with variance $\delta_k \Id$, we see that its conditional differential entropy satisfies $h(Y_k \mid X, Y_{k-1}) = \frac{d}{2} \log(2 \pi e \delta_k)$. Accordingly,   
\begin{align}
 \ex*{ \log p_k(Y_k \mid Y_{k-1})}  + \frac{d}{2} \log (2 \pi e \delta_k) 
& =h(Y_k \mid X, Y_{k-1})  - h(Y_k \mid Y_{k-1}) \\
 & = I(X; Y_k \mid Y_{k-1}) =I(X; Y_k) - I(X; Y_{k-1}) 
\end{align}
where the last step holds because  $Y_{k-1} - Y_k - X$ is a Markov chain. Plugging these expressions back into \eqref{eq:Delta_pq} and noting that $I(X;Y_0)=0$ gives the stated result.
\end{proof}

\begin{proof}[Proof of Theorem~\ref{thm:DYZ}]
 For a distribution $\target$ on $\bbR^d$ with finite second moments, we define the MMSE function $M\colon \bbR_+ \to \bbR$ 
 \begin{align}
 M(s) &\coloneqq \ex*{ \| X - \ex{ X \mid \sqrt{s} X + N}\|^2},
\end{align}
where $X \sim \target$ and $N \sim \normal(0, \Id)$ are independent. This function is non-increasing with $M(0) = \gtr(\cov(X))$. 
The I-MMSE relation \cite{guo:2005a} states that, for any $0 \le a < b$, 
\begin{align}
I(X; \sqrt{b}X +N) - I(X; \sqrt{a} X + N) = \frac{1}{2} \int_a^b M(s) \, ds . \label{eq:I-MMSE}
\end{align}
In other words, one half the MMSE is equal to the derivative of the mutual information with respect to $s$. 

From the invariance of mutual information to one-to-one-transformation we can write $I(X;Y_k)= I(X ; t_k X + W_k) = I(X ; \sqrt{t_k} X + N)$.  Then, by the I-MMSE relation and the monotonicity of the MMSE 
(see Figure~\ref{fig:Delta_Integral}) we obtain the  sandwich 
\begin{align}
   \frac{\delta_k}{2} M(t_{k})  \le  I(X; Y_{k}) - I(X; Y_{k-1}) \le \frac{\delta_k}{2} M(t_{k-1}), \label{eq:I-MMSE_bnds}
\end{align}
Using \eqref{eq:I-MMSE_bnds},  we can now write 
\begin{align}
\sum_{k=1}^n & \frac{\delta_k}{2} \ex{ \| X- \ex{X \mid Y_{k-1}} \|^2}  - I(X; Y_n) \\
& = \sum_{k=1}^n \frac{\delta_k}{2} M(t_{k-1})  - I(X; Y_n) \\
&  = \sum_{k=1}^n \left[ \frac{\delta_k}{2} M(t_{k-1})   +I(X; Y_{k-1}) - I(X; Y_{k}) \right]  \\
& \le \sum_{k=1}^n  \frac{\delta_k}{2}  (M(t_{k-1}) - M(t_{k}) ) \\
&\le \frac{\delta_\mathrm{max}}{2} \sum_{k=1}^n  (M(t_{k-1}) - M(t_{k}) )  \\ 
& =  \frac{\delta_\mathrm{max}}{2}  (M(0) - M(t_{n}) )
\le \frac{\delta_\mathrm{max}}{2} \gtr(\cov(X)).
\end{align}
Combining with Lemma~\ref{lem:DYZexact} completes the proof.
\end{proof}

\section{Sampling Process: Intuition and  Connections}\label{sec:intuition}
The Markov property implies
that the sequence $\{Y_k\}$ can be generated by sampling each $Y_k$ conditionally given $Y_{k-1}$.   This procedure can be implemented using the following steps: 
\begin{enumerate}
    \item Draw $X_k$ from 
    the conditional of $X$ given $Y_{k-1}$;
    \item Draw $\Np_k\sim \normal(0, \Id)$ independently of $(X_k, Y_{k-1})$;
    \item Set $
    Y_k = Y_{k-1}  + \delta_k X_k  + \sqrt{\delta_{k}}\Np_k$.
\end{enumerate}
Hence, the process $\{Y_t\}$  can be expressed as 
\begin{align}
Y_k = Y_{k-1}  + \delta_k X_k + \sqrt{\delta_k} \Np_k \label{eq:Yk_samp}
\end{align}
We emphasize that the above procedure defines a process with exactly the same distribution as the one defined by \eqref{eq:Yk}. In both cases, the sequences are driven by standard Gaussian processes $\{N_k\}$ and $\{\Np_k\}$, but there are also some key differences: 
\begin{itemize}
 \item In \eqref{eq:Yk} the innovation term $X$ is the same for every step. Conditional on $X$, the noise $N_k$ is independent of increments $\delta_{m} X + \sqrt{\delta_m} N_m$ for all $m \ne k$.  
 \item In \eqref{eq:Yk_samp} the innovation term $X_k$ changes with each step. Conditional  on  $X_k$, the noise $\Np_k$ is independent of  $\delta_{m} X_k + \sqrt{\delta_m} \Np_m$ for $m < k$, but not for $m > k$. 
\end{itemize}

This dual representation  of the same process provides the underlying intuition for the sampling procedure. On the one hand, the original representation in \eqref{eq:Yk} verifies that $t_k^{-1} Y_k$ is distributed according to $\target \ast \normal(0, t^{-1}_k \Id)$, and thus constitutes and approximate sample from $\target$ provided that $t_k$ is large enough. But this representation does not provide any guidance on how to produce the original sample $X$. 

On the other hand,  \eqref{eq:Yk_samp} shows that the same process can be implemented by replacing the innovation $X$ with a random term  $X_k$ that depends only on $Y_{k-1}$ and some additional randomness that is independent of $(Y_1, \dots, Y_{k-1})$.  

From this point of view, the behavior of the sampling scheme in \eqref{eq:Zk} is best understood by comparing with the sampling representation in \eqref{eq:Yk_samp}. Specifically, one can view the function $f_k(Y_{k-1})$ as providing a first-order approximation to $X_n$.
Assuming both processes are driven by the same noise sequence $\{\Np_k\}$, the only differences are due to the fluctuations in $X_k - f_k(Y_{k-1})$. Theorem~\ref{thm:DYZ} shows that if each $f_k$ provides a suitable approximation to the conditional mean estimator, then these  fluctuations are  negligible in the large-$n$ limit.

\subsection{Connection with Stochastic Localization}

Stochastic localization refers broadly to a framework for analyzing the mixing times of Markov chains~\cite{Eldan-gafa13,Chen-focs22}.
As described in \cite{Montanari-arxiv23}, the process defined by \eqref{eq:Yk} is an example of a stochastic localization scheme. To see this,  consider the measure-valued random process $\mu_1, \mu_2, \dots, $ where
\begin{align}
    \mu_k  \coloneqq \pr{ X \in \cdot  \mid  Y_{k-1}  }
\end{align}
is the conditional distribution of $X$ given $Y_{k-1}$. As $t_k$ increases, this sequence converges to a point-mass distribution centered at some point $X_{\infty}$. Since $\ex{ \target_k} = \target$ for all $k$,  the limit $X_{\infty}$ is a sample from $\target$.

\subsection{Connection with Diffusion Models}

Within the literature (e.g., see~\cite{Sohl-icml15,song:2019generative,Ho-neurips20,Song-iclr21}), diffusion-based sampling is often described in terms of a  ``forward model'' and a ``backward model'' for an underlying diffusion process: 
\begin{itemize}
\item The \emph{forward model} starts with a sample from the target distribution (or more generally an approximation of the form $\target \ast \normal(0, T^{-1} \Id)$) 
and then incrementally transforms it into a sample from a Gaussian distribution by adding noise and rescaling. 

\item The \emph{backward model} starts with a sample from Gaussian noise and then incrementally transforms it into a sample from the target distribution via a process that combines additive noise with nonlinear transformations. 
\end{itemize}

To connect these ideas with the results in this paper, observe that the particular choice for the comparison process in \eqref{eq:Yk} implicitly defines  forward and backward models for the diffusion limits of the  sampling scheme. In particular, the decomposition in  \eqref{eq:Yk_reverse} shows that the \emph{time-reversal} of \eqref{eq:Yk}, i.e. the stochastic process given by $Y_{n} , Y_{n-1}, \dots, Y_{1}$,  can be implemented by scaling and adding independent Gaussian noise. Under appropriate rescaling, this process can be viewed as the time-discretization of an underlying diffusion process  (the implied forward model) that transitions from $\target \ast \normal(0, t_n^{-1} \Id)$  to a much noisier version $\target \ast \normal(0, \delta_1^{-1}  \Id)$.

Likewise, the sampling representation of the  comparison process \eqref{eq:Yk_samp} can be viewed as the time-discretization of a continuous-time process (the implied backward model) that transitions the noise to a target sample. By Theorem~\ref{thm:DYZ}, we see that this backward model coincides with the continuous-time limit of \eqref{eq:Zk} as $n \to \infty$ with $\delta_k = T/n$. 

\begin{remark} 
The diffusion limits for the sampling scheme considered in this paper coincide with the original formulation of stochastic localization \cite{Eldan-gafa13,Chen-focs22}. By contrast,  much of the recent work on diffusion sampling \cite{song:2019generative,Ho-neurips20,Song-iclr21} considers a different but closely related setting where the forward model is defined by the Ornstein–Uhlenbeck process. In practice, the first-order discrete-time approximations for these two settings often have the same functional form (characterized by some transformed version of \eqref{eq:Zk}), even though the underlying processes do  not have the same distribution. Further details are provided in Appendix~\ref{sec:connection_diffusion}.
\end{remark}

\section{Additional Results}\label{sec:further_results}

\subsection{Optimization of Step Sizes}

In this section we assume that each $f_k$ is the conditional mean estimator and study the dependence on  $(n,T)$ for time steps given by 
\begin{align}
    t_k = \frac{ \alpha^k - 1}{ \alpha^n  - 1}  T , \qquad k=1,\dots, n  \label{eq:tk_alpha}
\end{align}
where $\alpha > 0$ is rate parameter. 
Under this specification, the step sizes satisfy $\delta_{k+1} = \alpha \, \delta_k$ for $k \ge 1$. The case $\alpha =1$ corresponds to uniform increments, i.e., $\delta_k = T/n$.

The bound in Theorem~\ref{thm:DYZ} depends on the maximum step size $\delta_{\mathrm{max}} \ge T/n$, and this results in a linear dependence on $T$. Using a refined analysis adapted to \eqref{eq:tk_alpha}, we show that this can be improved to a poly-logarithmic dependence without any additional assumptions on $\target$.  

Let $I \colon \bbR_+ \to \bbR$ be defined according to 
\begin{align}
    I(s) \coloneqq  I(X; \sqrt{s} X + N) = \frac{1}{2} \int_0^s M(t) \, dt
\end{align}
where $X \sim \target$ and $N \sim \normal(0, \Id)$ are independent. By  the I-MMSE relation, $I$ is strictly increasing and concave. If $\target$ has finite entropy then $I(s)$ is bounded. Otherwise, $I(s)$ increases without bound. It is well known that $I(s) \le \frac{1}{2} \log \det( \Id + s \cov(X))$ with equality if and only if $\target$ is Gaussian. 

\begin{theorem}\label{thm:alpha_bound}
If the step sizes are given by \eqref{eq:tk_alpha} and each $f_k$ is the conditional mean estimator of $X$ given $Y_{k-1}$,  then
\begin{align}
\Delta_n 
& \le (\alpha -1) \left(  \frac{T(M(0) - M(T))}{2 (\alpha^n -1) } + I(T) - \frac{ T M(T)}{2} \right)
\end{align}
\end{theorem}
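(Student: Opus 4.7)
The plan is to start from the exact expression in Lemma~\ref{lem:DYZexact}, specialize it to the conditional-mean choice of $f_k$, and then exploit the geometric spacing $\delta_k = \alpha\,\delta_{k-1}$ to sharpen the crude Riemann-sum estimate used in Theorem~\ref{thm:DYZ}. Since each $f_k$ equals the conditional mean, the last sum in Lemma~\ref{lem:DYZexact} vanishes and $\ex{\|X - \ex{X\mid Y_{k-1}}\|^2} = M(t_{k-1})$. Invariance of mutual information under rescaling also gives $I(X;Y_n) = I(X;\sqrt{t_n}X+N) = I(T) = \frac{1}{2}\int_0^T M(t)\,dt$. So the task reduces to bounding
\begin{align}
2\Delta_n = \sum_{k=1}^n \delta_k M(t_{k-1}) - \int_0^T M(t)\,dt,
\end{align}
which is the gap between a left Riemann sum of $M$ on $0 = t_0 < t_1 < \cdots < t_n = T$ and the underlying integral.

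The central step is an index-shift tailored to the geometric spacing. For each $k \ge 2$, I would write $\delta_k M(t_{k-1}) = \alpha \cdot \delta_{k-1} M(t_{k-1})$ and apply monotonicity of $M$ on $[t_{k-2},t_{k-1}]$ to get $\delta_{k-1} M(t_{k-1}) \le \int_{t_{k-2}}^{t_{k-1}} M(t)\,dt$. Summing over $k=2,\ldots,n$ and keeping the $k=1$ term separate yields
\begin{align}
\sum_{k=1}^n \delta_k M(t_{k-1}) \le \delta_1 M(0) + \alpha \int_0^{t_{n-1}} M(t)\,dt.
\end{align}
Subtracting $\int_0^T M(t)\,dt$, rewriting $\int_0^{t_{n-1}} M(t)\,dt = 2I(T) - \int_{t_{n-1}}^T M(t)\,dt$, and then bounding $\int_{t_{n-1}}^T M(t)\,dt \ge \delta_n M(T)$ (again by monotonicity) collapses the remaining integrals and gives
\begin{align}
2\Delta_n \le \delta_1 M(0) + 2(\alpha-1)\,I(T) - \alpha\,\delta_n M(T).
\end{align}

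To finish, I would substitute the explicit values $\delta_1 = (\alpha-1)T/(\alpha^n-1)$ and $\alpha\,\delta_n = (\alpha-1)\alpha^n T/(\alpha^n-1)$, factor out $(\alpha-1)$, and recombine the two ``boundary'' terms via the identity
\begin{align}
\frac{T M(0) - \alpha^n T M(T)}{\alpha^n - 1} = \frac{T(M(0)-M(T))}{\alpha^n - 1} - T M(T),
\end{align}
which produces exactly the stated bound. The only nontrivial step is the index-shift that pairs $\delta_k M(t_{k-1})$ with the integral over the \emph{preceding} subinterval, thereby converting the linear $\delta_{\max}$-dependence of Theorem~\ref{thm:DYZ} into the desired poly-logarithmic dependence in $T$; everything else is elementary Riemann-sum comparison and algebraic bookkeeping.
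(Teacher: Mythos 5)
Your proposal is correct and follows essentially the same route as the paper: after specializing Lemma~\ref{lem:DYZexact} to the conditional-mean choice, you exploit the geometric relation $\delta_k=\alpha\delta_{k-1}$ to shift the left Riemann sum onto the preceding subintervals and compare with $\int_0^T M$, arriving at the same intermediate bound $2\Delta_n \le \delta_1 M(0) - \alpha\delta_n M(T) + 2(\alpha-1)I(T)$ and the same final algebra. The only cosmetic difference is that the paper phrases the comparison through the discrete I-MMSE sandwich $\tfrac{\delta_k}{2}M(t_k)\le I(t_k)-I(t_{k-1})$ applied to the completed right Riemann sum, whereas you work with the integral directly and recover the boundary term via $\int_{t_{n-1}}^{T}M \ge \delta_n M(T)$; these are the same estimate in different notation.
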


To help interpret this result, observe that the limit as $\alpha \to 1$ gives a bound for uniform step sizes: $\Delta_n \le \frac{1}{2n} T (M(0) - M(T))$. This bound,  which is essentially the same bound as in Theorem~\ref{thm:DYZ}, has a linear dependence on $T$.  More generally, optimizing over $\alpha$ as a function of the pair $(n,T)$ can lead to significant improvements. For example, the following corollary shows that that $T$ can scale nearly exponentially with $n$ with a negligible impact on the convergence rate. 

\begin{cor}
Let $T_n$ be a sequence satisfying $T_n \to \infty$ and $\frac{1}{n} \log T_n \to 0$. If $\alpha_n = (T_n \log T_n)^{1/n}$ then 
\begin{align}
    \Delta_n \le  \frac{\log(T_n) I(T_n)}{n} (1 + o_n(1)).
\end{align}
Moreover, combining with the upper bound $I(s) \le \frac{d}{2} \log(1 + \frac{s}{d} M(0))$ yields
\begin{align}
    \Delta_n \le  \frac{d (\log T_n)^2}{2 n} (1 + o_n(1)).
\end{align}
\end{cor}
\begin{proof} Starting with Theorem~\ref{thm:alpha_bound} and dropping the negative terms gives the simplified bound:
\begin{align}
\Delta_n  & \le 
 (\alpha_n -1) \left(  \frac{T_n M(0)}{2 (\alpha_n^n -1) } + I(T_n) \right)
\end{align}
Under the assumptions on $T_n$, it is easily verified that $\alpha_n - 1  =  \frac{1}{n} \log(T_n) (1  + o_n(1))$ and $T_n / (\alpha^n_n - 1)  = T_n/(T_n \log(T_n) - 1)\ = o_n(1)$ as $n \to \infty$. 
\end{proof}
\color{black}

\begin{proof}[Proof of Theorem~\ref{thm:alpha_bound}]
Starting with Lemma~\ref{lem:DYZexact}, and using the fact that $\delta_{k} = \alpha\,  \delta_{k-1}$ for $k \ge 2$ leads to 
\begin{align}
\Delta_n & =  \sum_{k=1}^n \frac{\delta_k}{2}  M(t_{k-1})  - I(t_n) \\
& =  \frac{\delta_1}{2}   M(t_0) +   \sum_{k=2}^{n} \frac{\delta_{k}}{2} M(t_{k-1})  - I(t_n)\\
& =  \frac{\delta_1}{2} M(t_0) +  \alpha\sum_{k=2}^{n} \frac{\delta_{k-1}}{2}  M(t_{k-1})  - I(t_n)\\
& = \frac{\delta_1}{2} M(t_0) - \frac{\alpha\delta_n}{2}  M(t_n) + \alpha \sum_{k=1}^{n} \frac{\delta_{k}}{2} M(t_{k})  - I(t_n)
\end{align}
By the I-MMSE inequality in \eqref{eq:I-MMSE_bnds}, we have
\begin{align}
    \sum_{k=1}^{n} \frac{\delta_{k}}{2} M(t_{k}) \le  \sum_{k=1}^n [ I(t_k) - I(t_{k-1}) ]  = I(t_n) 
\end{align}
and this leads to 
\begin{align}
\Delta_n 
& \le  \frac{\delta_1}{2}  M(t_0) - \frac{\alpha\delta_n }{2}  M(t_n)  + (\alpha -1)I(t_n).
\end{align}
Noting that $\delta_1 =\frac{ \alpha -1}{ \alpha^n - 1} T$ and $\delta_n  = (\alpha -1) t_n +  \frac{ \alpha -1}{ \alpha^n - 1} T$  completes the proof. 
\end{proof}

\subsection{Improved Rates via Moment Matching}

The sampling scheme in \eqref{eq:Zk} provides a deterministic approximation $f_{k}(Y_{k-1})$ to the conditional samples $X_k$ appearing \eqref{eq:Yk_samp}. With respect to the relative entropy, the optimal approximation is the one that matches the mean. 

More generally, our analysis extends naturally to sampling schemes that replace the function evaluation $f_k(Y_{k-1})$ with a stochastic approximation to $X_k$. The only requirement, is that the resulting process is a Markov chain.  Specifically, we consider the generalization of \eqref{eq:Zk} given by
\begin{align}
    Z_k = Z_{k-1} + \delta_k \hat{X}_k  + \sqrt{\delta_k} \Np_k ,  \label{eq:Zkalt}
\end{align}
where $\hat{X}_k$ is sampled conditionally on $Z_{k-1}$ according to a Markov kernel $Q(\cdot \mid z,t)$ evaluated at $(Z_{k-1}, t_{k-1})$. Note that  by \eqref{eq:Yk_samp}, the process $\{Y_k\}$ corresponds to the Markov kernel $ P(\cdot \mid y,t )  = \pr{ X \in \cdot \mid Y(t) = y} $ where $ Y(t) \coloneqq t X + \sqrt{t}  N $ with  $X \sim \mu$ and $N \sim \normal(0, \Id)$ are independent.

\begin{condition}\label{cond:moments} For $T > 0$ and $m \in \bbN$, 
\begin{enumerate}[label = \arabic*), leftmargin=0.5cm]
\item \textbf{Sub-Gaussian Tails:} 
There exists $L > 0$ such that
\begin{align}
 \int e^{ \|x\|^2/L^2}  \mu(dx) \le 2     \quad \text{and} \quad 
 \int e^{ \|x\|^2/ L^2}  \ex*{Q( dx \mid Y(t), t)  } &\le 2    , \quad \forall t \in [0,T]
\end{align}
where $Y(t) = t X + \sqrt{t} N$ with 
$X \sim \mu$ and $N \sim \normal(0, \Id)$ independent. 

\item  \textbf{Matched Moments:} 
For all $\alpha \in \bbN_0^d$ with $\sum_{i=1}^d \alpha_i \le m$,  
    \begin{align}
        \int x^{\alpha} P(dx \mid y,t) =\int x^{\alpha} Q(dx \mid y,t) 
    \end{align}
for all $(y,t) \in \bbR^d \times [0,T]$ where $x^\alpha \coloneqq \prod_{i=1}^d x_i^{\alpha_i}$.
\end{enumerate}
    
\end{condition}

The following theorem shows that matching higher-order moments can increase the rate of convergence in terms of the number of time steps.

\begin{theorem} \label{thm:moments}
Let $\{Z_k\}$ be generated according to \eqref{eq:Zkalt}. Under Condition~\ref{cond:moments},  there exists a positive constant $c_{d,m}$  depending only on $(d,m)$ such that 
\begin{align}
  \Delta_n \le c_{d,m} L^{2(m+1)}  \sum_{k=1}^n \delta_k^{m+1} .
\end{align}
In particular, if $\delta_k = T/n$, then 
\begin{align}
    \Delta_n \le c_{d,m,T,L} \,  n^{-m} .
\end{align}
\end{theorem}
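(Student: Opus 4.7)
The plan is to exploit the Markov structure of both processes, reduce the global bound to a sum of per-step KL divergences, and then estimate each per-step term via a Taylor/Hermite expansion in $\sqrt{\delta_k}$ that uses the moment-matching hypothesis. Both $\{Y_k\}$ and $\{Z_k\}$ are Markov chains: for $\{Y_k\}$ this follows from Lemma~\ref{lem:MarkovProperty} combined with the equivalent sampling representation \eqref{eq:Yk_samp}, and for $\{Z_k\}$ it holds by construction since $\hat X_k$ is drawn conditionally on $Z_{k-1}$. The chain rule for relative entropy, as in Lemma~\ref{lem:DYZexact}, therefore yields
\begin{align}
\Delta_n = \sum_{k=1}^n \ex*{ D\bigl(\mathsf{Law}(Y_k \mid Y_{k-1}) \,\big\|\, \mathsf{Law}(Z_k \mid Z_{k-1}=Y_{k-1})\bigr)}.
\end{align}
Conditional on $Y_{k-1} = y$, the $Y$-transition is the law of $y + \delta_k X + \sqrt{\delta_k}\, N$ with $X \sim P(\cdot \mid y, t_{k-1})$, while the $Z$-transition substitutes $X' \sim Q(\cdot \mid y, t_{k-1})$. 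Translation invariance and a $\sqrt{\delta_k}$ rescaling identify each summand with $D(f_P \,\|\, f_Q)$, where $f_P$ and $f_Q$ are the densities of $\sqrt{\delta_k}\, X + N$ under $X \sim P$ and $X \sim Q$ respectively.

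The heart of the argument is a single-step estimate: for any sub-Gaussian $P, Q$ on $\bbR^d$ whose moments agree up to order $m$, one should have $D(f_P \,\|\, f_Q) \le c_{d,m}\, L^{2(m+1)}\, \delta^{m+1}$. I would prove this by passing to $\chi^2$ via the unconditional inequality $D \le \chi^2$, and then expanding the density ratios $h_P(u) := f_P(u)/\phi(u) = \ex{\exp(\sqrt{\delta}\,\langle u,X\rangle - \tfrac{\delta}{2}\|X\|^2)}$, and $h_Q$ analogously, using the Hermite generating identity
\begin{align}
\exp\bigl(\sqrt{\delta}\,\langle u,x\rangle - \tfrac{\delta}{2}\|x\|^2\bigr) = \sum_{\alpha \in \bbN_0^d} \frac{\delta^{|\alpha|/2}}{\alpha!}\, H_\alpha(u)\, x^\alpha.
\end{align}
Moment matching cancels every term of order $|\alpha| \le m$, leaving a tail series of leading order $\delta^{(m+1)/2}$. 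Squaring, invoking Hermite orthogonality $\int H_\alpha H_\beta\, \phi\, du = \alpha!\, \delta_{\alpha\beta}$, and using the sub-Gaussian moment bound $|\ex{X^\alpha}| \le C_\alpha\, L^{|\alpha|}$ produce $\int (h_P - h_Q)^2 \phi\, du \le c_{d,m}\, L^{2(m+1)}\, \delta^{m+1}$. A Jensen-type lower envelope $f_Q(u) \ge c\,\phi(u)$, also guaranteed by sub-Gaussianity, converts this $L^2(\phi)$ bound into the desired $\chi^2$ bound. Summing over $k$ then gives $\Delta_n \le c_{d,m}\, L^{2(m+1)}\sum_k \delta_k^{m+1}$, and specializing to $\delta_k = T/n$ produces the $n^{-m}$ rate.

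The main obstacle is making the single-step estimate rigorous and uniform in $(y,t)\in\bbR^d\times[0,T]$. Three issues deserve care: (i) justifying the termwise integration of the Hermite series, which should follow from dominated convergence via the sub-Gaussian integrability; (ii) tracking the combinatorial prefactor $c_{d,m}$, which grows with the number of multi-indices of order $m+1$ in $d$ variables, and in particular keeping the contributions of $|\alpha| > m+1$ controlled so that a single leading term dominates; and (iii) the passage from $\chi^2$ against $\phi$ to $\chi^2$ against $f_Q$, which is precisely why Condition~\ref{cond:moments} imposes the sub-Gaussian tail bound not only on $\mu$ but also on the marginal distribution of $X' \sim Q(\cdot \mid Y(t), t)$, uniformly in $t \in [0,T]$.
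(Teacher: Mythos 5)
Your global skeleton is the same as the paper's: both chains are Markov, the chain rule reduces $\Delta_n$ to a sum of expected per-step divergences between Gaussian-smoothed conditional laws (the paper's \eqref{eq:Delta_moments}), and everything hinges on a single-step estimate of order $(\delta_k L^2)^{m+1}$ under moment matching plus sub-Gaussian tails (the paper's Lemma~\ref{lem:Dmoment}). The genuine gap is in your step (iii). The claimed uniform pointwise envelope $f_Q(u)\ge c\,\phi(u)$ is false: Jensen only gives $f_Q(u)/\phi(u)\ \ge\ \exp\{\sqrt{\delta}\,\langle u,m_Q\rangle-\tfrac{\delta}{2}s_Q\}$ with $m_Q=\int x\,Q(dx)$ and $s_Q=\int\|x\|^2\,Q(dx)$, and this tends to $0$ as $\langle u,m_Q\rangle\to-\infty$ whenever $m_Q\neq 0$. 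Concretely, take $Q$ to be the point mass at some $x_0\neq 0$ (perfectly sub-Gaussian): then $f_Q(u)/\phi(u)=e^{\sqrt{\delta}\langle u,x_0\rangle-\delta\|x_0\|^2/2}$, which is not bounded below by any constant. So you cannot pass from your $L^2(\phi)$ bound on $(h_P-h_Q)^2$ to the $\chi^2$ divergence by a constant factor. The paper's way around this is a H\"older step with exponents $3$ and $3/2$: it isolates $\int q_s^{-3}\,d\gamma$, where the Jensen lower bound is enough because the \emph{Gaussian integration} absorbs the exponentially decaying factor (giving $\le \ex{e^{6s\|B\|^2}}$), and pairs it with an $L^3(\gamma)$ norm of $p_s-q_s$. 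Note that once you H\"olderize you lose access to Hermite orthogonality (an $L^2$ tool), which is exactly why the paper controls the tail via the integral-form Taylor remainder rather than by squaring the Hermite series; your repair would have to do something similar, or use Cauchy--Schwarz and then handle an $L^2$ norm under a tilted/shifted Gaussian.

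A second, smaller gap: even for the $L^2(\phi)$ computation, the tail identity $\int (h_P-h_Q)^2\,\phi = \sum_{|\alpha|\ge m+1}\frac{\delta^{|\alpha|}}{\alpha!}\big(\int x^\alpha dP-\int x^\alpha dQ\big)^2$ is a geometric-type series whose terms scale like $(c_d\,\delta L^2)^{|\alpha|}$ under the sub-Gaussian moment bounds; it converges, and is dominated by its leading $\delta^{m+1}$ order, only when $\delta L^2$ is small. The theorem's bound must hold for \emph{all} step sizes, so you need a separate argument in the large-$\delta$ regime. The paper does this with the crude convexity bound $D \le \ex{D(\normal(A,s^{-1}\Id)\,\|\,\normal(B,s^{-1}\Id))} = \tfrac{s}{2}\ex{\|A-B\|^2} \lesssim s^{m+1}\big(\ex{e^{6\|A\|^2}}+\ex{e^{6\|B\|^2}}\big)$ for $s\ge 1$, after the $\beta$-rescaling. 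With these two repairs---a H\"older/Cauchy--Schwarz step in place of the pointwise envelope, and the case split in $\delta L^2$---your argument essentially reproduces the paper's proof of Lemma~\ref{lem:Dmoment} and hence of Theorem~\ref{thm:moments}.
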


\begin{remark}
The case of matched second moments  ($m =2$)  can be realized as a modification of  \eqref{eq:Zk} that also adapts the covariance of the Gaussian noise term, i.e.,
\begin{align}
    Z_k = Z_{k-1} + \delta_k f_k(Z_{k-1})  + \left( \delta_k^2 \Sigma_k(Z_{k-1}) + \delta_k \Id \right)^{1/2}  N_k  \label{eq:Zkm2}
\end{align}
where $f_k(y) = \ex{X \mid Y_{k-1} =y}$ and $\Sigma_k(y) = \cov(X \mid Y_{k-1} = y)$ are the conditional mean and covariance functions defined with respect to \eqref{eq:Yk}.
By construction, the implied Markov kernel for this process satisfies the matched moments condition for $m = 2$.
If $\{f_k\}$ and $\{\Sigma_k\}$ can be approximated accurately, the divergence decreases at rate $1/n^2$. This analysis is related to acceleration schemes proposed in \cite{wu2024:stochastic,li:2024provable}.
\end{remark}



\begin{proof}[Proof of Theorem~\ref{thm:moments}] 
Define the conditional distributions: 
\begin{align}
\mu_t &= P(\cdot \mid  Y(t), t)\quad \text{and} \quad \nu_t  = Q(\cdot \mid  Y(t), t)
\end{align}
where $Y(t) = t X + \sqrt{t} N $ with $X \sim \target$ and $N \sim \normal(0,\Id)$ independent.  Using the fact that both $\{Y_k\}$ and $\{Z_k\}$ are Markov chains along with the fact that relative entropy is invariant to one-to-one transformations, we can write
\begin{align}
 \Delta_n &  =   \sum_{k=1}^n   \ex{D\big( \mu_{t_{k-1}} \ast  \normal(\delta_k^{-1})  \, \| \, \nu_{t_{k-1}} \ast  \normal(\delta_k^{-1}\big) }  \label{eq:Delta_moments}
\end{align}
where we have introduced the notation $\normal(u)= \normal(0, u \Id)$.

By the moment matching assumption in  Condition~\ref{cond:moments} and Lemma~\ref{lem:Dmoment} in Appendix~\ref{sec:moment_matching_bnd}.  there is a constant $c_{d,m}$ such that, for any $\beta > 0$,  the following holds almost surely: 
\begin{align}
&D(\mu_{t_{k-1}} \ast \normal(\delta_k^{-1}) \,\| \,  \nu_{t_{k-1}} \ast \normal(\delta_k^{-1})) \\
& \le  c_{d,m}  \,  (\delta_k/\beta) ^{m+1} \left( \int e^{  \beta \|x\|^2} P(dx \mid Y(t_{k-1}), t_{k-1})   +\int e^{  \beta \|x\|^2} Q(dx \mid Y(t_{k-1}), t_{k-1})  \right).
\end{align}
Evaluating with $\beta = 1/L^2$, taking the expectation of both sides and invoking the sub-Gaussian tail assumption in Condition~\ref{cond:moments} then yields
\begin{align}
 \ex{ D(\mu_{t_{k-1}} \ast \normal(\delta_k^{-1}) \,\| \,  \nu_{t_{k-1}} \ast \normal(\delta_k^{-1})) } 
& \le 4c_{d,m} L^{2(m+1)} \delta_k^{m+1} 
\end{align}
Plugging this inequality back into \eqref{eq:Delta_moments} completes the proof. 
\end{proof}





\appendix

\section{Moment Matching Divergence Bound}
\label{sec:moment_matching_bnd}

The following result provides a uniform upper bound on the divergence between distributions satisfying a moment matching condition. The proof is adapted from the proof of Theorem~2.5 in \cite{chen:2022asymptotics}, which provides the exact asymptotics in the $s \to 0$ limit.

\begin{lemma}\label{lem:Dmoment}
Let $\mu$ and $\nu$ be distributions on $\bbR^d$  such that, for every $\alpha \in \bbN_0^d$ with $\sum_{i=1}^d \alpha_i \le m$, 
\begin{align}
\int x^\alpha \mu(dx) = \int x^{\alpha} \nu(dx).
\end{align}
Then, for all $s, \beta > 0$,
\begin{align}
D(\mu \ast \normal(s^{-1}) \,\| \,  \nu \ast \normal(s^{-1})) 
 \le  c_{d,m}  \,  (s/\beta) ^{m+1} \left( \int e^{  \beta \|x\|^2} \mu(dx)  +  \int e^{ \beta  \|x\|^2} \nu(dx)  \right),
\end{align}
where $c_{d,m}$ is a positive constant that depends only $(d,m)$. 
\end{lemma}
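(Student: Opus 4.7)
The plan is to reduce the KL divergence to a chi-squared estimate and then exploit the matched moments to produce a Taylor remainder of order $m+1$, whose $s^{m+1}$ scaling yields the claimed bound.

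Let $\phi(z)$ denote the density of $\normal(0, s^{-1}\Id)$. Completing the square gives the factorization $p_\mu(y) = \phi(y)\,M_\mu(y)$, where $M_\mu(y) = \int e^{s x \cdot y - s\|x\|^2/2}\,\mu(dx)$, and similarly for $\nu$. I would Taylor expand the integrand $h(x;y,s) = e^{s x \cdot y - s\|x\|^2/2}$ as a function of $x$ around $x=0$ up to order $m$; by the matched-moment assumption, the polynomial terms of degree $\le m$ integrate identically against $\mu$ and $\nu$, leaving
\begin{equation}
M_\mu(y) - M_\nu(y) = \int r_m(x;y,s)\,(\mu-\nu)(dx),
\end{equation}
where $r_m$ is the Lagrange remainder. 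A direct computation using $\partial_x^\alpha h = s^{|\alpha|/2} H_\alpha(\sqrt{s}(x-y))\,h$, with $H_\alpha$ a Hermite-type polynomial, shows that $r_m$ carries an explicit $s^{(m+1)/2}$ prefactor multiplied by $\|x\|^{m+1}$ and a polynomial of degree $m+1$ in $\sqrt{s}(y-tx)$.

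Next, apply the standard bound $D \le \chi^2$, giving $D \le \int \phi(y)\,(M_\mu(y) - M_\nu(y))^2 / M_\nu(y)\,dy$. Cauchy-Schwarz splits the $(\mu-\nu)$ integral into integrals against $\mu$ and $\nu$ separately, and the key identity $\phi(y)\,h(tx;y,s) = \phi(y-tx)$ (from completing the square) allows the inner $y$-integral to reduce to a Gaussian expectation of a Hermite polynomial, which evaluates to a constant independent of $s$. Combined with the $s^{m+1}$ factor from squaring the remainder, the outer integrand takes the form $s^{m+1}\|x\|^{2(m+1)}(\mu+\nu)(dx)$ up to constants. The sub-Gaussian assumption then converts $\|x\|^{2(m+1)}$ into the required exponential moment via the elementary inequality $t^{m+1} \le (m+1)!\,e^{t}$ with $t = \beta\|x\|^2$, giving $\|x\|^{2(m+1)} \le (m+1)!\,\beta^{-(m+1)}\,e^{\beta\|x\|^2}$ and producing the advertised $(s/\beta)^{m+1}$ factor multiplied by $\int e^{\beta\|x\|^2}(\mu+\nu)(dx)$.

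The main obstacle is controlling the denominator $1/M_\nu(y)$, which can decay exponentially as $\|y\|$ grows away from the support of $\nu$. I would handle this by combining Jensen's inequality $M_\nu(y) \ge \exp(s\bar m_\nu \cdot y - s \bar c_\nu / 2)$, where the first and second moments of $\nu$ are bounded in terms of $L$ by the sub-Gaussian hypothesis, with the cancellation identity $\phi(y)\,h(tx;y,s) = \phi(y-tx)$ used above. The residual exponential factors in $y$ then integrate to finite quantities depending only on $(d,m)$, and tracking constants through the argument yields the stated bound. This is precisely the estimate needed for Theorem~\ref{thm:moments}, and it parallels the corresponding step in the cited asymptotic analysis, which isolates the leading $s \to 0$ behavior rather than the non-asymptotic bound required here.
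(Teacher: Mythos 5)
Your overall strategy is the same as the paper's: pass to the chi-square divergence, write the density ratio via the Gaussian likelihood factor, Taylor-expand in the integration variable so that the matched moments kill all terms of degree $\le m$, leaving a Hermite-type remainder carrying $s^{(m+1)/2}$ per copy, and control the awkward denominator by a pointwise Jensen lower bound on the mixture density. Where the paper differs is in the bookkeeping that makes the final statement come out exactly as claimed, and this is where your sketch has two genuine gaps.

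First, the large-$s$ regime is missing. Your remainder estimates produce factors of the form $e^{Cs\|x\|^2}$ (e.g.\ from $\phi(y)h(tx;y,s)^2=\phi(y-2tx)e^{st^2\|x\|^2}$ and from the exponentials released by the completed squares), and these can only be absorbed into $\int e^{\beta\|x\|^2}(\mu+\nu)(dx)$ when $s$ is small compared with $\beta$; the lemma, however, is asserted for all $s,\beta>0$, and in the application $s/\beta=\delta_k L^2$ need not be small. The paper handles this with a separate two-line argument for large $s$: by convexity of relative entropy, $D\le \tfrac{s}{2}\E\|A-B\|^2$, which is trivially $\lesssim s^{m+1}(\,\cdot\,)$ there, together with a rescaling reduction that normalizes $\beta$. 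You would need an analogous case split (say $s\le c\beta$ versus $s\ge c\beta$), otherwise the Taylor-based estimate simply fails to close.

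Second, your claim that after the Jensen bound on $1/M_\nu$ ``the residual exponential factors in $y$ integrate to finite quantities depending only on $(d,m)$'' is not correct as stated: those factors involve $e^{Cs(\E_\nu\|x\|^2+\|\E_\nu x\|^2)}$, i.e.\ they depend on $\nu$ (and invoking the sub-Gaussian hypothesis here would make the constant depend on $L$, which is not part of the lemma — the lemma's constant must depend on $(d,m)$ only, with all distribution dependence confined to the additive exponential-moment factor). These terms can be converted to exponential moments via $e^{Cs\E_\nu\|x\|^2}\le \E_\nu e^{Cs\|x\|^2}$, but in your multiplicative chi-square/Cauchy--Schwarz structure they then multiply the remainder integral, so the distribution-dependent quantities enter with total power exceeding one and you obtain a strictly weaker bound than the one stated (adequate for Theorem~\ref{thm:moments}, where the moments are $\le 2$, but not the lemma as written). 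The paper avoids this by applying H\"older with exponents $3$ and $3/2$, so the denominator term $\big(\int q_s^{-3}d\gamma\big)^{1/3}$ and the remainder term enter at powers $1/3$ and $2/3$ summing to one, and then $a^{1/3}b^{2/3}\le a+b$ recovers the additive form with a purely $(d,m)$-dependent constant. Restructuring your argument along those lines (or some equivalent balancing of exponents), plus the missing case split in $s$, would close the proof.
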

\begin{proof}
Throughout this proof, we use the notation $f \lesssim g$ to indicate that the inequality $f \le c_{d,n} g$ holds for some positive constant $c_{d,n}$ that may depend on $(d,n)$. Let  $\gamma = \normal(0, \Id)$ be the standard Gaussian measure, and  let $(A, B)$ be independent of $N \sim \gamma$  with marginals $A \sim \mu $ and $B \sim \nu$. In the following, we will prove that 
\begin{align}
D(\mu \ast \normal(s^{-1}) \,\| \,  \nu \ast \normal(s^{-1})) 
& \lesssim s^{m+1} \left( \ex[\big]{ e^{  6 \|A\|^2}} + \ex[\big]{ e^{6  \|B\|^2}}  \right), \quad \text{for all  $s > 0$}. \label{eq:Ds6}
\end{align}
For $\beta >0$, we then recover the stated inequality by observing that the relative entropy is invariant under the 
simultaneous rescaling of $s$ and $\mu, \nu$ defined by $(s, A, B) \mapsto \big(\tfrac{6}{\beta}   s, \sqrt{ \tfrac{\beta}{6}}   A,  \sqrt{ \tfrac{\beta}{6}} B\big)$. Making this change of variables and then absorbing the factor of $6^{m+1}$ into the constant gives the desired result.

In order to prove \eqref{eq:Ds6}, we first  consider the case $s \ge 1$. By the convexity of relative entropy and Jensen's inequality,
\begin{align}
D(\mu \ast \normal(s^{-1}) \,\| \,  \nu \ast \normal(s^{-1}))
& \le \ex*{ 
D( \normal(A , s^{-1}\Id) \,\| \,   \normal(B , s^{-1}\Id)) } =\frac{s}{2} \ex{ \|A- B\|^2}.
\end{align}
Combining with the basic inequality $\|A-B\|^2 \le 2 (\|A\|^2 + \|B\|^2) \le \frac{1}{3} (e^{ 6\|A\|^2} + e^{6 \|B\|^2})$ along with the fact $s \le s^{m+1}$ for all $s \ge 1$ verifies that \eqref{eq:Ds6} holds uniformly over $s \in[1, \infty)$.

Next, we consider the case $s \in (0,1)$. Let the densities of  $\sqrt{s} A + N$ and $\sqrt{s} B + N$ with respect to $\gamma$ be denoted by
\begin{align}
p_s(x) \coloneqq \ex{ e^{ \sqrt{s} \langle x, A \rangle - \frac{s}{2} \|A\|^2}}, \qquad 
q_s(x) \coloneqq \ex{ e^{ \sqrt{s} \langle x, B \rangle - \frac{s}{2} \|B\|^2}}.
\end{align}
Using the fact that relative entropy is bounded from above by the chi-square divergence, and then applying H\"older's inequality with conjugate exponents $3$ and $3/2$ gives
\begin{align}
D(\mu \ast \normal(s^{-1}) \,\| \,  \nu \ast \normal(s^{-1}))  
& \le  \int \frac{ (p_s(x) - q_s(x)^2}{ q_s(x) }  \, \gamma(dx)  \\
& \le  \Big(  \int \frac{\gamma(dx)}{q_s(x)^3}      \Big)^{1/3} \Big(\int \left| p_s(x) - q_s(x)\right|^3\, \gamma(dx)  \Big)^{2/3} . \label{eq:Ds6b}
\end{align}
For the first term, Jensen's inequality gives the lower bound
$ q_s(x) 
   \ge \exp\{ \sqrt{s} \langle x, \ex{B} \rangle - \frac{s}{2} \ex{ \|B\|^2}\}$, 
which leads to 
\begin{align}
\int \frac{ \gamma(dx)  }{q_s(x)^3}  &\le   \int e^{ - 3\sqrt{s} \langle x, \ex{B} \rangle + \frac{3s}{2}  \ex{ \|B\|^2}} \, \gamma(dx)    =  e^{  \frac{9}{2} s \|\ex{B}\|^2  + \frac{3s}{2}   \ex{ \|B\|^2}}  \le  \ex[\big]{    e^{6s \|B\|^2}} .\label{eq:Ds6c}
\end{align}
For the second term, we use the $m$-th order Taylor series expansion of $y \mapsto \exp\{ \langle x, y \rangle - \frac{1}{2} \|y\|^2\}$  about the point $y = 0$ to obtain 
\begin{align}
p_s(x)  & =  \sum_{\alpha \in \bbN_0^d \, : \, |\alpha| \le m} \frac{s^{|\alpha|/2} H_{\alpha}(x)}{ \alpha!}  \ex{ A^\alpha} + \ex{ r_{m+1}(x, \sqrt{s} A)}\\
q_s(x)  & =  \sum_{\alpha \in \bbN_0^d \, : \, |\alpha| \le m} \frac{s^{|\alpha|/2} H_{\alpha}(x)}{ \alpha!}  \ex{ B^\alpha} + \ex{ r_{m+1}(x, \sqrt{s} B)}
\end{align}
where $H_\alpha$ are the Hermite polynomials and $r_{m+1}(x,y)$ is the remainder term. The  assumption that  $A$ and $B$  have the same moments of up to order to $m$ along with the basic inequality $|u-v|^3 \le 4 (|u|^3+ |v|^3)$ then leads to 
\begin{align}
 \int \left| p_s(x)  - q_s(x) \right|^3\, \gamma(dx) 
& = \int \left|\ex{ r_{m+1}(x, \sqrt{s} A)} -\ex{ r_{m+1}(x, \sqrt{s} B)}  \right|^3\, \gamma(dx) \\
& \le    4 \int  \left( \left|\ex{ r_{m+1}(x, \sqrt{s} A)} \right|^3 +    \left|\ex{ r_{m+1}(x, \sqrt{s} B)} \right|^3 \right)  \, \gamma(dx) 
\end{align}

To proceed, we use the integral form for the remainder given in Equation (4.3) of \cite{chen:2022asymptotics},
\begin{align}
r_{m+1}(x,y) &= (m+1) \int_0^1 (1 - u)^m \sum_{\substack{\alpha \in \bbN_0^d \\  |\alpha| = n+1}} \frac{ y^\alpha}{ \alpha!}  g_{\alpha}(x, u y) \, du 
\end{align}
where $|\alpha| \coloneqq \alpha_1+ \cdots + \alpha_d$ and  $g_{\alpha}(x,y) \coloneqq H_\alpha(x - y) \exp\{ \langle x, y \rangle  - \frac{1}{2} \|y\|^2\}$. For $Y \in \{  \sqrt{s}A, \sqrt{s} B\}$, we can now write
\begin{align}
\int  \left|\ex{ r_{m+1}(x, Y)} \right|^3 \, \gamma(dx)  
& \lesssim    \max_{0 \le u \le 1}  \max_{\substack{\alpha \in \bbN_0^d \\  |\alpha| = n+1}}  \ex*{ \|Y\|^{3m+3} \int   \left|  g_{\alpha}(x ,  u Y) \right|^3 \, \gamma(dx)}  .
\end{align}
Since each $H_{\alpha}$ is a polynomial of degree $m+1$,
\begin{align}
   \left| g_{\alpha}(x,y)\right| &\lesssim  \, (1 + \|x-y\|^{m+1}) e^{ \langle x, y \rangle - \frac{1}{2} \|y\|^2},
\end{align}
and this leads to 
\begin{align}
   \int \left| g_{\alpha}(x,y)\right|^3 \, \gamma(dx) 
& \lesssim \int (1 + \|x- y\|^{3m+3}) e^{ 3 \langle x, y \rangle - \frac{3}{2} \|y\|^2} \, \gamma(dx) \\
& = \int (2\pi)^{-d/2}  (1 + \|x- y\|^{3m+3}) e^{-  \frac{1}{2} \| x - 3 y\|^2  +3 \|y\|^2} \, dx \\
& \lesssim  (1 + \|y\|^{3m+3}) e^{ 3\|y\|^2}.  
\end{align}
Combining with the display above, we see that
\begin{align}
\int  \left|\ex{ r_{m+1}(x, \sqrt{s} A )} \right|^3 \, \gamma(dx)  
& \lesssim \, s^{\frac{3}{2} (m+1)}  \ex*{ \|A\|^{3m+3} (1 + \|\sqrt{s}A\|^{3m+3}) e^{ 3s \|A\|^2}}\\
& \lesssim  \, s^{\frac{3}{2} (m+1)}  \ex[\big]{ e^{ 6 \|A\|^2}} \quad \text{ for all $0 < s \le 1$,} \label{eq:Ds6d}
\end{align}
with the same inequality holding for $B$. 

Plugging  \eqref{eq:Ds6c} and \eqref{eq:Ds6d} back into \eqref{eq:Ds6b}, we conclude that 
\begin{align}
    D(\mu \ast \normal(s^{-1}) \,\| \,  \nu \ast \normal(s^{-1})) \lesssim s^{m+1} \left( \ex[\big]{ e^{ 6 \|A\|^2}}+\ex[\big]{ e^{ 6 \|B\|^2}} \right ) \quad \text{for all $0 \le s \le 1$},\label{eq:Ds6e}.
\end{align}
Having verified both the cases $s \in [1,\infty)$ and $s \in (0,1)$ the proof of proof of \eqref{eq:Ds6} is complete. 
\end{proof}

\section{Connection with Diffusion Models} \label{sec:connection_diffusion}

As discussed in Section~\ref{sec:intuition},  diffusion-based sampling methods are often described in terms of forward and backward diffusion models. The forward model defines a process  $(\Xf_t)_{t \ge 0}$ whose distribution transitions from  the target distribution $\target$ at time $t = 0$ to the standard Gaussian distribution as $t$ increases. The backward model defines a process  $(\Xb_t)_{t \ge 0}$ that transforms a noise variable into a sample from $\mu$. 

\begin{itemize}
\item Much of sampling literature is described in terms of  the Ornstein–Uhlenbeck process (OU) and considers a parameterization that gives rise to score-based estimation. For a family of densities $(p_t)_{t \ge 0}$ on $\bbR^d$, the score function $s \colon \bbR^d \times [0,\infty)$ is the gradient of the log density $
    s(y,t) \coloneqq \nabla \log p_t(y)$.

\item Meanwhile, the formulation used in stochastic localization and also this paper is more directly connected to standard Brownian motion. In this setting the nonlinearity appearing in the discretization is the conditional mean estimator.  
 \end{itemize}

Under additive Gaussian noise,  affine mapping between the score function and the conditional mean estimator. Specifically, if $p_t$ is the density of $Y_t = a_t  X + \sigma_t N$ for scalars $(a_t, \sigma_t)$  and random variables $(X, N) \sim \mu \otimes  \normal(0,1)$ then Tweedie'sformula yields 
\begin{align}
    s(y,t) & =  \frac{a_t \ex{X \mid Y  = y} - y}{ \sigma_t^2} 
\end{align}
Consequently, the discrete-time approximations to these models are both functionally equivalent to the basic sampling process introduced in \eqref{eq:Zk}.  The theoretical guarantees depend on the choice of the comparison process.

\end{document}